\let\llncssubparagraph\subparagraph
\let\subparagraph\paragraph
\let\subparagraph\llncssubparagraph
\titlespacing{\section}{0pt}{4ex}{2ex}
\titlespacing{\subsection}{0pt}{2ex}{1ex}
\titlespacing{\subsubsection}{0pt}{1ex}{0ex}
\newenvironment{dm}
  {\vspace*{-2pt}\displaymath}{\vspace*{-2pt}\enddisplaymath}
\newenvironment{eq}
  {\vspace*{-2pt}\equation}{\vspace*{-2pt}\endequation}
\newcommand{\sss}{\scriptscriptstyle}
\renewenvironment{proof}{\noindent{\bf Proof.}}{\hfill$\Box$\medskip}
\newcommand{\ie}{i.e.}
\newcommand{\wrt}{w.r.t \xspace}
\newcommand{\etc}{e.t.c}
\newcommand{\etal}{et al.}
\newcommand{\eg}{e.g.}
\newcommand{\tuple}[1]{\ensuremath{( #1 )}\xspace}
\newcommand{\ev}{\ensuremath{\mathrm{E}}\xspace}
\newcommand{\cref}[1]{\cite{#1}}
\newcommand{\figref}[1]{Fig.~\ref{#1}}
\newcommand{\equref}[1]{Equation~\ref{#1}}
\newcommand{\thmref}[1]{Theorem~\ref{#1}}
\newcommand{\lemref}[1]{Lemma~\ref{#1}}
\newcommand{\secref}[1]{Section~\ref{#1}}
\newcommand{\textrmsmall}[1]{\textrm{\fontsize{6}{7.2}\selectfont{#1}}}
\newcommand{\human}{\textrmsmall{H}\xspace}
\newcommand{\robot}{\textrmsmall{R}\xspace}
\newcommand{\action} [2]{\ensuremath{a^{#1}_{#2}}}
\newcommand{\Action} [2]{\ensuremath{A^{#1}_{#2}}}
\newcommand{\ah}[1]{\action{\human}{#1}}
\newcommand{\ar}[1]{\action{\robot}{#1}}
\newcommand{\AH} {\Action{\human}{}}
\newcommand{\AR} {\Action{\robot}{}}
\newcommand{\history} [2]{\ensuremath{h^{#1}_{#2}}}
\newcommand{\x}[1]{\ensuremath{x_{#1}}}
\newcommand{\X}{\ensuremath{X}}
\newcommand{\dH}[1]{\ensuremath{d^{\human}_{#1}}}
\newcommand{\dR}[1]{\ensuremath{d^{\robot}_{#1}}}
\newcommand{\dHn}[1]{\ensuremath{d^{\human'}_{#1}}}
\newcommand{\dRn}[1]{\ensuremath{d^{\robot'}_{#1}}}
\newcommand{\policy} [1]{\ensuremath{\pi^{#1}}}
\newcommand{\policyh}{\policy{\human}\xspace}
\newcommand{\policyr}{\policy{\robot}\xspace}
\newcommand{\policyropt}{\ensuremath{\policy{\robot}_{*}}\xspace}
\tikzset{->-/.style={decoration={
  markings,
  mark=at position 0.5 with {\arrow{#1}}},postaction={decorate}}}
\newcommand{\myarrow}[1]{\tikz{\draw[->-={stealth}, line width=0.4mm, dashed] (0,0) to (0.6cm, 0cm);}}
\newcommand{\intention}[1]{\ensuremath{\theta_{#1}}}
\newcommand{\ahn}[1]{\action{\human'}{#1}}
\newcommand{\arn}[1]{\action{\robot'}{#1}}
\newcommand{\rwdR}[1]{\ensuremath{r^\robot_{#1}}}
\newcommand{\rwdH}[1]{\ensuremath{r^\human_{#1}}}
\newcommand{\RwdR}[1]{\ensuremath{R^\robot_{#1}}}
\newcommand{\RwdH}[1]{\ensuremath{R^\human_{#1}}}
\newcommand{\spdR}[1]{\ensuremath{v^\robot_{#1}}}
\newcommand{\spdH}[1]{\ensuremath{v^\human_{#1}}}
\newcommand{\spdRn}[1]{\ensuremath{v^{\robot'}_{#1}}}
\newcommand{\spdHn}[1]{\ensuremath{v^{\human'}_{#1}}}
\newcommand{\vh}[1]{\ensuremath{v^\human_{#1}}}
\newcommand{\vr}[1]{\ensuremath{v^\robot_{#1}}}
\newcommand{\gmean}{\ensuremath{\mu}}
\newcommand{\gcov}{\ensuremath{k}}
\newcommand{\gx}[1]{\ensuremath{\mathbf{x}_{#1}}}
\newcommand{\glen}[1]{\ensuremath{\sigma_{#1}}}
\newcommand{\betaD}{\ensuremath{Beta}\xspace}
\newcommand{\safef}{\ensuremath{p^\textrmsmall{G}}\xspace}
\newcommand{\safefp}{\ensuremath{\tilde{p}^\textrmsmall{G}}\xspace}
\newcommand{\hData}{\ensuremath{\mathbb{D}}\xspace}
\newcommand{\tildeV}{\ensuremath{\tilde{V}}\xspace}
\newcommand{\safeVStar}{\ensuremath{\tilde{V}^{\textrmsmall{G}^*}}\xspace}
\newcommand{\safeConstant}{\ensuremath{\phi}\xspace}
\newcommand{\rb}{\ensuremath{r^\textrmsmall{B}}\xspace}
\newcommand{\disRB}{\ensuremath{\beta}\xspace}
\newcommand{\optply}{\ensuremath{\mathcal{A}}\xspace}
\newcommand{\ipomg}{IPL-G\xspace}
\newcommand{\ipom}{IPL\xspace}
\begin{document}
\mainmatter              
\title{Guided Exploration of Human Intentions for Human-Robot Interaction}
\titlerunning{}  
%

\author{Min Chen \and David Hsu \and Wee Sun Lee}
\authorrunning{Min Chen et al.} 
%
%
\institute{National University of Singapore, Singapore 117417, Singapore}

\maketitle              

\begin{abstract}

  Robot understanding of human intentions is essential for fluid human-robot
  interaction.  Intentions, however, cannot be directly observed and must be
  inferred from behaviors. We learn a model of adaptive human behavior
  conditioned on the intention as a latent variable.  We then embed the human
  behavior model into a principled probabilistic decision model, which
  enables the robot to (i) explore actively in order to infer human intentions
  and (ii) choose actions that maximize its performance. Furthermore, the
  robot learns from the demonstrated actions of human experts to further
  improve exploration.  Preliminary experiments in simulation indicate that
  our approach, when applied to autonomous driving, improves the efficiency
  and safety of driving in common interactive driving scenarios.
\end{abstract}
\section{Introduction}
\label{sec:introduction}

Understanding human intentions is essential for fluid human-robot interaction.  It
helps the robot to interpret humans' behaviors and predict their future
actions.  Earlier work often treats humans as passive moving obstacles in the
environment: humans do not react to robot
actions~\cref{bai2015intention,bandyopadhyay2013intention,fern2007decision}.
At a result, the robot is overly conservative. It waits and observes until 
humans' intentions become clear.  However, in reality, humans are not merely
moving obstacles, and they respond to robot actions.  For example, in our study
of lane-switch for autonomous driving, a robot car tries to switch to a
lane, in which a human-driven car drives
(\figref{fig:lane-switch-example}).  A conservative human driver will slow
down, while an aggressive driver will accelerate and refuse to let the robot
car in.  If the robot car chooses to wait, the human driver will maintain
the speed, and the robot car will not learn the human driver's intention.
 Human intentions, human actions, and robot actions are
interconnected.
The robot must take advantage of the connections and actively 
\emph{explore} in order to
understand human intentions.

\begin{figure}[t]
    \centering
    \includegraphics[width=1.0\columnwidth]{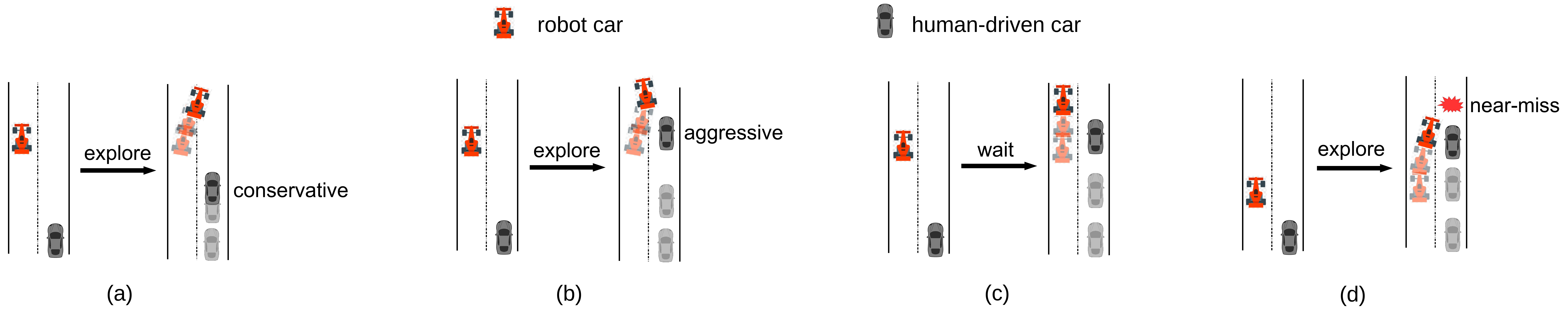}
    \caption{
        A robot car explores human intentions during the lane-switch. 
        (a,b)~If the robot car tries to explore, 
        a conservative human driver will slow down, while
        an aggressive human driver may accelerate, revealing the human intentions. 
        (c)~If the robot car chooses to wait,
        the human driver will maintain the speed. The robot car will not learn
        the human driver's intention and initiate the lane switch suitably.
        (d)~Exploration may be 
        dangerous, when the two car are too close.
    }
    \label{fig:lane-switch-example}
\end{figure}

Exploration is, however, not always appropriate. 
Consider the lane-switch  example again (\figref{fig:lane-switch-example}d).
If the robot car switches lanes when the two cars are very close, the
human will not have enough time to react.  In our study, the participants
indicated that they felt unsafe in such situations.  So the robot must not
only explore, but also explore safely, for effective human-robot interaction.


To this end, we propose two ideas.  The first is an intention-driven human
behavior model,  integrated into a probabilistic robot decision making
framework for human-robot interaction.  Unlike earlier work on intentional
behavior modeling~\cref{bai2015intention}, our model conditions human actions
explicitly on robot actions and captures \emph{adaptive} human behaviors.
Since the human intention is not directly observable, we model it as a latent
variable in a partially observable Markov decision process
(POMDP)~\cref{kaelbling1998planning}.  We further assume that the human
intention remains static during a single interaction, thus reducing the POMDP
model to a computationally more efficient variant,
POMDP-lite~\cref{chen2016pomdp}.  Despite the simplifying assumption, the
resulting intention POMDP-lite model successfully captures many interesting
human behaviors. See \secref{sec:experiments} for examples.  Handcrafting
accurate POMDP models is a major challenge.  Here we take a data-driven
approach and learn the intention POMDP-lite model from data.  Solving an
intention POMDP-lite model produces a policy that enables the robot to explore
actively and infer human intentions in order to improve robot performance.

Aggressive exploration is sometimes unsafe in applications such as driving. 
Our second idea is to leverage human expert demonstrations for improved robot
exploration. 
We learn from human demonstrations a probability distribution over state-action
pairs. The learned distribution captures the actions of human experts when
exploration is needed. It is then used as a heuristic to guide robot
exploration and favor the frequently demonstrated state-action pairs.

We evaluated our approach in simulation on common driving
tasks, including lane-switch and intersection navigation.  Compared with a
myopic policy that does not actively explore human intentions, intention
POMDP-lite substantially improved robot driving efficiency.  Combined with
guided exploration, it also improved driving safety.  While our
experiments are specific to autonomous driving, the approach is general and
applicable to other human-robot interaction tasks that require
exploration of human intentions.

In the following, \secref{sec:related} briefly surveys related
work. \secref{sec:intention-pomdp-lite} presents the intention POMDP-lite
model and guided exploration. \secref{sec:humanpolicy-safe} describes how we
learn an intention-driven human driving policy and a distribution for guided exploration.
\secref{sec:experiments} compares our approach with common alternatives
in simulation. \secref{sec:discussion} discusses the limitations of the
current approach and directions for further investigation.

\section{Related Work}
\label{sec:related}

Intention has been studied extensively in the field
of psychology~\cref{astington1993child,bratman1987intention},
where intention is characterized as a mental state that 
represents human's commitment to carrying out a sequence
of actions in the future.
Understanding intentions is crucial in understanding
various social contexts, 
\eg, it helps to interpret other people's behaviors and 
predict their future actions~\cref{feinfield1999young}. 

In human-robot interaction, intention has been used as a 
means to model
human behaviors~\cref{nikolaidis2015efficient,bai2015intention,bandyopadhyay2013intention,sunberg2017value,lam2015improving}.
For example, Bai \etal~\cref{bai2015intention} modeled
pedestrians' behaviors with a set of intentions, 
and they enabled the autonomous car to drive successfully 
in a crowd.
However, they assumed that the human won't react to 
the robot, which made the robot act conservatively
most of the time.
Most recently, Sadigh \etal~\cref{sadigh2016information} 
showed that robot's action directly affects human actions,
which can be used to actively infer human intentions~\cref{sadigh2016planning}.
However, inferring human intentions is not the end goal.
Instead, our work embeds an intention-driven human behavior model 
into a principled robot decision model to maximize the robot performance.
In other words, 
the robot may choose to ignore the human 
if he/she does not affect robot performance.

To maximize performance,
the robot needs to actively infer human 
intentions (exploration), and achieve its own goal (exploitation).
In addition, the robot needs to explore gently 
as the human might not willing to be probed
in certain scenarios.
The partially observable Markov decision process (POMDP)~\cref{kaelbling1998planning} 
trades off exploration and exploitation optimally.
However, POMDP itself does not model 
human's intent to be probed, thus it may generate 
explorative actions that are too aggressive.
Imitation learning derives a robot policy directly from human
demonstrations~\cref{abbeel2004apprenticeship}.
But the robot policy cannot be generalized to unseen
state space.
Instead of learning a robot policy directly,
Garcia \etal~\cref{garcia2012safe} used human demonstrations
to guide robot explorations,
and they significantly reduced the damage incurred from 
exploring unknown state-action space.
Our work draws insight from~\cref{garcia2012safe},
and we explicitly guide robot explorations in a POMDP model
with human demonstrations.

\section{Intention POMDP-lite with Guided Exploration}
\label{sec:intention-pomdp-lite}

\subsection{Mathematical Formulation of Human-Robot Interaction}
Mathematically, we formulate the human-robot interaction
problem as a Markov decision process 
$\tuple{\X, \AR, \AH, T, \RwdR{}, \RwdH{}, \gamma}$,
where $\x{} \in \X$ is the world state.
$\AR$ is the set of actions that the robot can take, 
and $\AH$ is 
the set of actions that the human can take.
The system evolves according to a stochastic state transition
function
$T(\x{}, \ar{}, \ah{}, \x{}') = P(\x{}'|\x{},\ar{},\ah{})$,
which captures the probability of transitioning to
state $\x{}'$ when joint actions $\tuple{\ar{}, \ah{}}$
are applied at state $\x{}$.
At each time step,
the robot receives a reward of $\rwdR{}(\x{}, \ar{}, \ah{})$
and the human receives a reward of $\rwdH{}(\x{}, \ar{}, \ah{})$.
The discount factor $\gamma$ is a constant scalar that
favors immediate rewards over future ones.

Given a human behavior policy, \ie,
$\ah{} \sim \policyh$, the optimal
value function of the robot is given by Bellman's equation 
\begin{eq}
    V^*(\x{} | \policyh) =
    \max_{\ar{}} \bigg{\{}
    \underset{\ah{} \sim \policyh}{\ev} \bigg{[}
    \rwdR{}(\x{}, \ar{}, \ah{}) +
    \sum_{\x{}'}
    \gamma P(\x{}' | \x{}, \ar{}, \ah{}) V^*(\x{}' | \policyh)
    \bigg{]}
    \bigg{\}}
    \label{eq:opt-value}
\end{eq}



The optimal robot policy \policyropt is the action
that maximizes the right hand size of~\equref{eq:opt-value},
and the key to solve it is to have a model
of human behaviors.








\subsection{Intention-Driven Human Behavior Modeling}
\label{subsec:intention-human-behav}

Our insight in modeling human behaviors is that
people cannot be treated as obstacles that move, \ie, 
people select actions based on their \emph{intentions} and they
\emph{adapt} to what the robot does.

Following previous works on intention modeling~\cref{bandyopadhyay2013intention,sadigh2016information}, we 
assume that human intention can be represented as a single 
discrete random variable \intention{},
and we explicitly condition human actions on their intention, 
\ie, $\ah{t} \sim \policyh(\ah{t} | \x{t}, \intention{t})$.

Apart from their own intention, people also adapt to the robot.
In the most general case, people condition their actions 
on the entire human-robot interaction history, \ie, 
\allowbreak $h_t = \{ \ar{0}, \ah{0}, \hdots, \ar{t-1}, \ah{t-1} \}$. 
However, the history $h_t$ may grow arbitrary long and make 
human actions extremely difficult to compute.
Even if we can compute it , 
this will still not be a good model for how people
make decisions in day to day tasks since
people are known to be not fully rational,
\ie, bounded rationality~\cref{rubinstein1998modeling,kahneman2003maps}.
In human-robot interaction, 
bounded rationality has been modeled by 
assuming that people have ``bounded memory'', and they
based their decisions only on most recent 
observations~\cref{nikolaidis2016formalizing}.

Our human behavior model connects the human intention model 
with the bounded memory model, 
\ie, people condition their actions on their intention and the last
$k$ steps of the history, $\history{k}{t} = \{\ar{t-k}, \allowbreak \ah{t-k}, \hdots, \ar{t-1}, \ah{t-1}\}$. 
Thus, the human behavior policy can be rewritten as
\begin{eq}
    \ah{t} \sim \policyh(\ah{t} | \x{t}, \history{k}{t}, \intention{t})
    \label{eq:human-policy}
\end{eq}

\subsection{Intention POMDP-lite}
\label{subsec:pomdp-lite}

A key challenge is that human intention can not be directly observed by
the robot, and therefore has to be inferred from human behaviors.
To achieve that,
We model human intention as a latent state variable in
a partially observable Markov decision process (POMDP).
In this paper,
we assume the human intention remains static during a single interaction.
Thus,
we can reduce the POMDP model to a POMDP-lite model~\cref{chen2016pomdp}, 
which can be solved more efficiently.

To build the intention POMDP-lite model, 
we first create a factored state 
$s = (\x{},\intention{})$ that contains the fully
observable world state \x{} and the partially observable human intention \intention{}.
We maintain a belief $b$ over human
intention.
The human behavior policy is 
embedded in the POMDP-lite transition dynamics, and
we describe in~\secref{sec:humanpolicy-safe} how we learn
it from data.
\figref{fig:intention-pomdp-lite} shows the 
intention POMDP-lite graphical model and
human-robot interaction flowchart.

\begin{figure}[t]
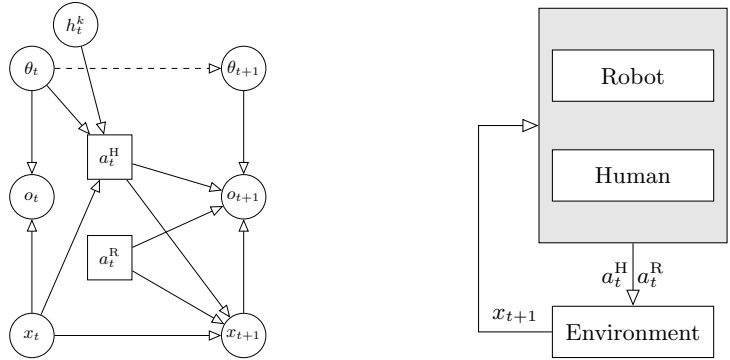

    \centering
    \includestandalone[width=.28\columnwidth]{figs/drive-pomdp-lite}
    \hspace{2.5cm}
    \includestandalone[width=.28\columnwidth]{figs/drive-flowchart}
    \caption{
        The intention POMDP-lite graphical model and the human-robot
        interaction flowchart.
        The robot action $a_t^\mathrm{\sss R}$ depends on the world state $\x{t}$
        and the belief 
        over human intention $\intention{t}$.
        The dashed arrow indicates the static assumption on
        the intention dynamics, \ie, $\intention{t} = \intention{t+1}$.
        } 
    \label{fig:intention-pomdp-lite}
\end{figure}

The solution to an intention POMDP-lite is a policy that maps
belief states to robot actions, \ie, $\ar{t} = \policyr(b_t, \x{t})$.
And it has two distinct objectives:
(1) maximizing rewards based on current information (exploitation);
(2) gathering information over human intention (exploration).

The Bayes-optimal robot trades off exploration/exploitation
by incorporating belief updates into its plans~\cref{duff2003design}.
It acts to maximize the following value function
\begin{eq}
    V^*(b_t, \x{t}) = 
    \underset{\ar{t}}{\max}
    \bigg{\{}
    \rwdR{}(b_t, \x{t}, \ar{t}) +
    \underset{\x{t+1}}{\sum}
    \gamma P(\x{t+1} | b_t, \x{t}, \ar{t}) V^*(b_{t+1}, \x{t+1})
    \bigg{\}}
    \label{eq:opt-bayesian-value}
\end{eq}
where 
$\rwdR{}(b_t, \x{t}, \ar{t}) = \underset{\intention{t}}{\sum} 
b_t(\intention{t}) P(\ah{t} | \intention{t}) \rwdR{}(\x{t}, \ar{t}, \ah{t})$ 
is the mean reward function, and 
$P(\x{t+1} | b_t, \x{t}, \ar{t}) = \underset{\intention{t}}{\sum} 
b_t(\intention{t}) P(\ah{t} | \intention{t}) 
P(\x{t+1} | \x{t}, \ar{t}, \ah{t})$
is the mean transition function,
where $P(\ah{t} | \intention{t})$
depends on the human behavior policy in~\equref{eq:human-policy}.
Note that $b_{t+1} = \tau(b_t, \ar{t}, \x{t+1})$ is the updated belief
after arriving at a new state $\x{t+1}$, where $\tau$ represents Bayes' rule.

However, Bayes-optimal planning is intractable in general.
An alternative approach to trade off exploration/exploitation
is the explicit modification of the reward function, \ie,
adding an extra reward bonus for exploration~\cref{chen2016pomdp}.
In this case, the robot acts to maximize
the following value function:
\begin{eq}
    \label{eq:value}
    \begin{split}
        \tilde{V}^*(b_t, \x{t}) = & \underset{\ar{t}}{\max} \bigg{\{} \disRB \rb(b_t, \x{t}, \ar{t}) +
    \rwdR{}(b_t, \x{t}, \ar{t}) + \\
    & \qquad \qquad \quad \underset{\x{t+1}}{\sum}
    \gamma P(\x{t+1} | b_t, \x{t}, \ar{t}) \tildeV^*(b_{t}, \x{t+1})
    \bigg{\}}
    \end{split}
\end{eq}
where $\rb(b_t, \x{t}, \ar{t})$ is the reward bonus term
that encourages the robot to explore.
\disRB is a constant scalar that explicitly trades off 
exploration and exploitation.
Note that the belief $b_t$ is not updated in this equation.
In other words, solving~\equref{eq:value} is 
equivalent to solving a mean MDP of current belief 
with an additional reward bonus, which 
is computationally efficient.
More importantly, the computed robot policy  
is near Bayes-optimal~\cref{chen2016pomdp},
given that the reward bonus is defined
as the expected $L_1$ distance between two consecutive
beliefs $\underset{b_{t+1}}{\ev} || b_{t+1} - b_t ||_1$.


\subsection{Guided Exploration}
\label{subsec:safe-explore}

The policy computed by~\equref{eq:value} enables
the robot to actively infer human intentions, however,
the human might not like to be probed in certain scenarios 
(\eg, \figref{fig:lane-switch-example}d).
Thus, guidance needs to be provided for more effective robot explorations.

We achieve guided exploration via human expert 
demonstrations. 
More specifically,
we maintain a probability distribution
over each state-action pair, 
where \allowbreak $\safef(\x{}, \ar{}) \in [0, 1]$ 
measures how likely the human expert will take action
\ar{} at state \x{}.
We will describe in~\secref{sec:humanpolicy-safe} how
we learn $\safef(\x{}, \ar{})$ from data.
The learned probability distribution is then embedded into the reward
bonus term as a prior knowledge, 
and our algorithm acts to maximize the following value function:
\begin{eq}
    \label{eq:value-safe}
    \begin{split}
        \safeVStar(b_t, \x{t}) = & \underset{\ar{t}}{\max} \bigg{\{} \disRB \safef(\x{t}, \ar{t}) \rb(b_t, \x{t}, \ar{t}) +
    \rwdR{}(b_t, \x{t}, \ar{t}) + \\
    & \qquad \qquad \quad \underset{\x{t+1}}{\sum}
    \gamma P(\x{t+1} | b_t, \x{t}, \ar{t}) \safeVStar(b_{t}, \x{t+1})
    \bigg{\}}
    \end{split}
\end{eq}
Compared with~\equref{eq:value}, the reward bonus term is 
multiplied by $\safef(\x{t}, \ar{t})$,
which discourages robot exploration when $\safef(\x{t}, \ar{t})$ is small.
Although we discourage the robot to explore certain state
action pairs,
the theoretic results in the POMDP-lite paper~\cref{chen2016pomdp} retains, 
\ie, the robot policy remains near Bayes-optimal.
The proof of the theorem is deferred to the appendix.
\begin{theorem}    \label{thm:theorem1}
    Let $\optply_t$ denote the policy followed by our algorithm 
    at time step $t$. Let $\x{t}$, $b_t$ denote the state and belief
    at time step $t$. 
    Let $|\X{}|$, $|\AR{}|$ denote the size of the state
    space and robot action space.
    Let \safeConstant denote the minimal
    value of the probability distribution \safef(\x{}, \ar{}), \ie,
    $\safeConstant = \min \{ \safef(\x{}, \ar{}) \}$.
    Suppose $\safeConstant > 0$, $\disRB \allowbreak = \allowbreak O \allowbreak (\frac{|\X{}|^2, |\AR{}|}{\safeConstant(1-\gamma)^2})$,
    for any inputs 
    $\forall \delta > 0$, $\epsilon > 0$,
    with probability at least $1-\delta$,  
    \begin{dm}
        V^{\optply_t}(b_t, \x{t}) \geq V^*(b_t,\x{t}) - 4 \epsilon.
    \end{dm}
\end{theorem}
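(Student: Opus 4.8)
The plan is to adapt the PAC-MDP-style near-Bayes-optimality analysis of POMDP-lite~\cref{chen2016pomdp} to the guided value function \safeVStar of~\equref{eq:value-safe}. The whole argument rests on two structural facts. First, the exploration bonus $\rb(b_t,\x{t},\ar{t})$ equals the expected one-step $L_1$ belief change $\underset{b_{t+1}}{\ev}\|b_{t+1}-b_t\|_1$; since the belief lives in a fixed probability simplex over the finitely many intention values and is updated by Bayes' rule, the total discounted belief change accumulated along any trajectory is bounded by a constant independent of the horizon. Second, the guiding factor is uniformly sandwiched, $\safeConstant \le \safef(\x{},\ar{}) \le 1$. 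The plan is to use the lower bound \safeConstant to preserve \emph{optimism} and the upper bound to keep the total amount of exploration \emph{finite}, reducing the guided problem to the unguided one at the cost of a $1/\safeConstant$ blow-up in the bonus coefficient \disRB.

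First I would set up the standard machinery on the belief-state (hyperstate) MDP. Call a belief-state--action pair \emph{known} when the mean transition and reward of~\equref{eq:value-safe} approximate the true Bayes dynamics of~\equref{eq:opt-bayesian-value} to within $\epsilon$, equivalently when its bonus $\rb$ is below a suitable threshold. A simulation lemma then shows that on known pairs the value of acting in the mean MDP is within $\epsilon$ of acting optimally in the true dynamics; this step is untouched by guidance, since it concerns the transition model, not the bonus. The next step is optimism, namely $\safeVStar(b_t,\x{t}) \ge V^*(b_t,\x{t})$. This is where \safeConstant enters: because $\safef \ge \safeConstant$, the guided bonus at every pair is at least $\disRB \safeConstant \rb$, so choosing $\disRB = \Theta\!\big(\tfrac{|\X{}|^2|\AR{}|}{\safeConstant(1-\gamma)^2}\big)$ makes the \emph{effective} coefficient $\disRB \safeConstant = \Theta\!\big(\tfrac{|\X{}|^2|\AR{}|}{(1-\gamma)^2}\big)$ match precisely the value for which~\cref{chen2016pomdp} establishes optimism; optimism of \safeVStar then follows by monotonicity of the Bellman operator in the bonus.

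With optimism in hand I would run the usual explore-or-exploit dichotomy: from any $(b_t,\x{t})$, either the policy $\optply_t$ is already $4\epsilon$-near-optimal, or, within the $\tfrac{1}{1-\gamma}$-effective horizon, it reaches an unknown pair with non-negligible probability. It then remains to bound how often the second case can occur, and here the \emph{upper} bound is what I need: the per-step guided bonus satisfies $\disRB \safef \rb \le \disRB \rb$, so the total discounted exploration reward is at most \disRB times the bounded total belief change. Consequently the number of steps visiting unknown pairs is at most a polynomial in $|\X{}|,|\AR{}|,\tfrac{1}{1-\gamma},\tfrac{1}{\epsilon}$ divided by \safeConstant. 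A Hoeffding bound on the realized (as opposed to expected) belief change at each such step, together with a union bound over these finitely many exploration events, yields that with probability at least $1-\delta$ all but a bounded number of steps are near-optimal, which combined with optimism gives $V^{\optply_t}(b_t,\x{t}) \ge V^*(b_t,\x{t}) - 4\epsilon$.

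The hard part will be reconciling the two competing effects of the guidance factor. Multiplying the bonus by \safef shrinks it, which weakens optimism and forces us to inflate \disRB; but a larger \disRB threatens to enlarge the total exploration budget and break the sample-complexity bound. The crux is to show that both effects are governed by the single quantity \safeConstant: the lower bound caps exactly how much optimism is lost, repaired by the $1/\safeConstant$ factor in \disRB, while the upper bound $\safef \le 1$ guarantees the per-step bonus is never larger than in the unguided case, so the finiteness of the exploration budget is preserved. Making these two bounds interlock so that the accuracy stays at $4\epsilon$ under exactly the stated scaling of \disRB is the technical heart of the argument; everything else is a transcription of the POMDP-lite analysis to the belief-state MDP.
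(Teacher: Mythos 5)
Your proposal is correct and follows essentially the same route as the paper: the paper's proof also hinges on exactly the two facts you identify, namely that $\safef(\x{},\ar{}) \geq \safeConstant$ restores $\epsilon$-optimism once \disRB is inflated by $1/\safeConstant$ (the paper's Lemma~\ref{lem:lemma1}, which reduces to Lemma~3 of~\cref{chen2016pomdp}), and that $\safef(\x{},\ar{}) < 1$ means the guided bonus is never larger than the unguided one, so the POMDP-lite sample-complexity analysis carries over unchanged. The paper simply defers all of the PAC-MDP scaffolding (known/unknown pairs, simulation lemma, explore-or-exploit) to~\cref{chen2016pomdp}, whereas you spell it out.
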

In other words, our algorithm is $4 \epsilon$-close to the Bayes-optimal
policy, for all but
$m = O\big{(}\mathrm{poly}(|\X{}|, \allowbreak |\AR{}|, \frac{1}{\epsilon},
\frac{1}{\delta}, \frac{1}{1-\gamma})\big{)}$ time steps.

\section{Learning Human Behavior Policies and Guided Robot Exploration}
\label{sec:humanpolicy-safe}

Nested within the intention POMDP-lite model is
the human behavior policy \allowbreak $\policyh(\ah{t} | \x{t}, \history{k}{t}, \intention{t})$,
and the guided exploration distribution \allowbreak $\safef(\x{t}, \ar{t})$. 
We adopt a data-driven approach and learn those two
models from data for the interactive driving tasks.
Note that suitable probabilistic models derived from alternative
approaches can be substituted for these learned models.

\subsection{Data Collection}
\label{subsec:data-collection}

\begin{figure}[t]
    \centering
    \includegraphics[width=.99\columnwidth]{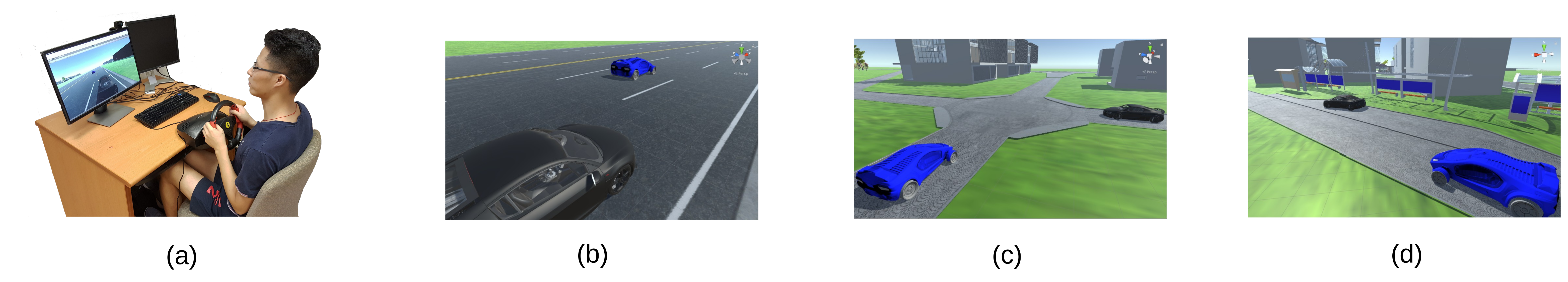} 
    \caption{Simulation setup.
        (a)    A human driver interacts
        with a robot car in the driving simulator powered
        by the Unity 3D engine. (b,c,d)
        Three interactive driving tasks: lane-switch, intersection,
        and lane-merge.
    }
    \label{fig:experiment-setup}
\end{figure}

\paragraph{Interactive driving tasks.}

\figref{fig:experiment-setup} shows the simulation setup
and three typical interactive driving tasks.
For simplicity,
we assume both the human-driven car and the robot car 
follow a fixed path respectively.
For example, in the intersection scenario, each of them will
follow a path that goes straight forward.
With this assumption, the human driver and the robot 
only need to control the speed of the vehicles. The
steering angle is controlled by a 
path tracking algorithm~\cref{coulter1992implementation}.
One exception is the lane-switch scenario, where the robot 
can decide when to switch lanes, 
and has two additional actions, \ie, $\{ switch-left, switch-right \}$.
Once the robot decides to switch lanes, a new path will
be generated online and the path tracking algorithm will 
start following the new path.

In general, human intention in driving scenarios has multiple
dimensions, \eg, turn-left/turn-right, aggressive/conservative,
\etc.
In this paper, we focus on the last dimension, \ie, the
human driver can be either aggressive or conservative. 

\paragraph{Participants.}
We recruited $10$ participants (3 female, 7 male) in
the age range of $22-30$ years old.
All participants owned a valid driving license.

\paragraph{Design.}
The human-driven car was controlled by one of the participants,
and the robot car was controlled by a human expert.
Note that the human expert was not recruited from the general 
public but one of the experiment conductors.
Intuitively, one can treat the participants as the human drivers 
that the robot will interact with, 
and the human expert as the owner of robot car whom teaches the 
robot how to act in different scenarios.

We learn a human drivers' behavior model from the controls
recorded from the participants.
To capture different human intentions, we asked each participant
to perform the task as an aggressive human driver and as a 
conservative human driver.
Note that the notion of ``aggressive/conservative'' is 
subjective and may vary
among different participants. Thus, we expect certain
amount of variance in our human behavior prediction model.

Similarly, we learn a guided exploration distribution from
the controls recorded from the human expert.
Since safety is our primary concern in the driving tasks,
the human expert was told to drive carefully.

\paragraph{Procedure.}
Before the simulation started, the participant was asked
to follow one of the driver intentions, \ie, aggressive or 
conservative.
Once the simulation started, the participant and the 
human expert could control the speed of their vehicles
respectively via an accelerator pedal that provides
continuous input.
In the lane-switch scenario, the human expert could
also decide when to switch to the other lane.
The simulation ends once the robot car has achieved its goal,
\ie, crossed the intersection or switched to another lane.

\paragraph{Data format.}
\begin{wrapfigure}{r}{0.4\columnwidth}
  \vspace*{-30pt}
    \centering
    \includegraphics[width=.35\columnwidth]{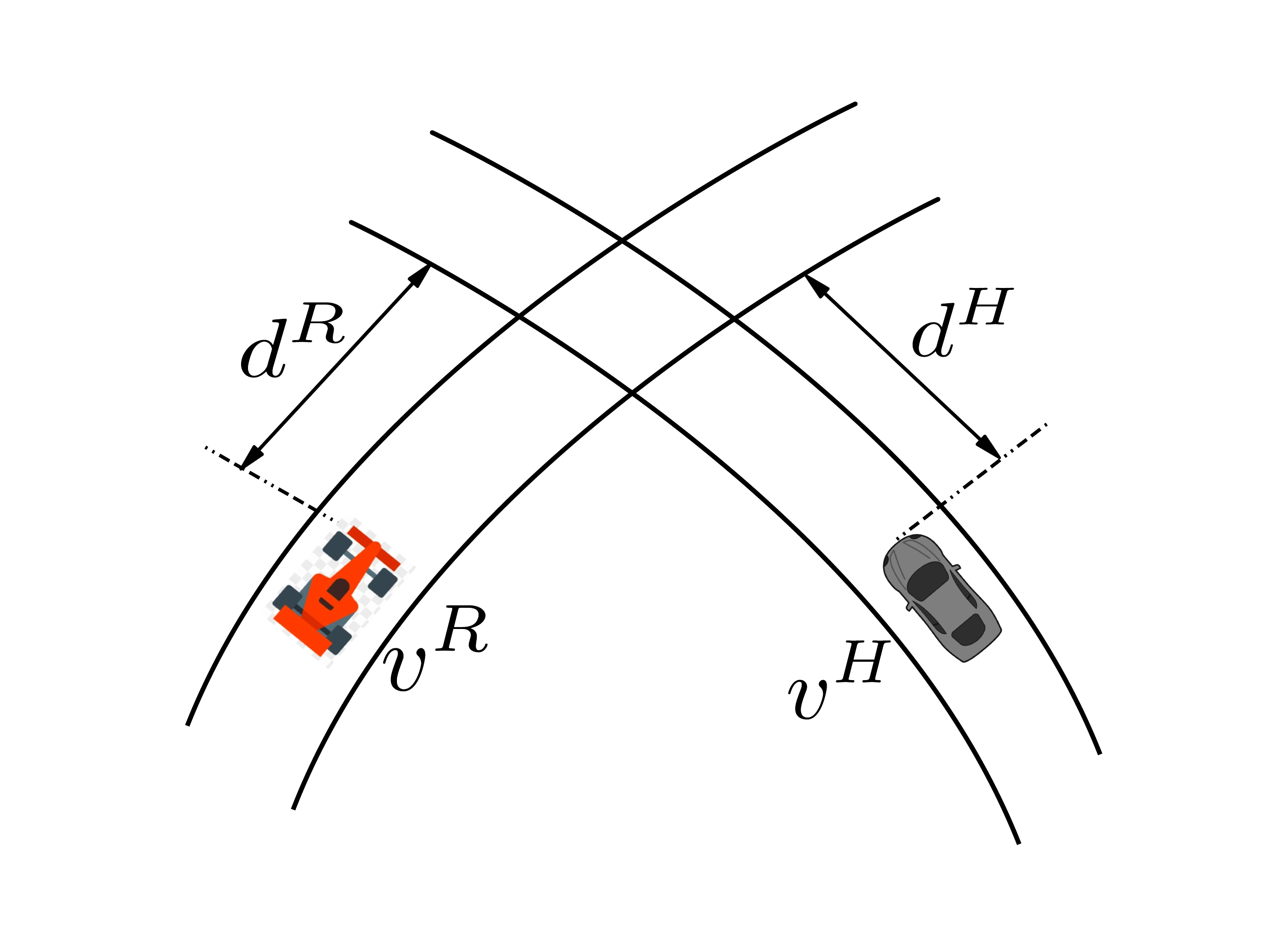}
    \caption{The world state for an interactive driving task.}
    \label{fig:world-state-driving}
\end{wrapfigure}
 The world state in the driving task is depicted 
in~\figref{fig:world-state-driving}, $x = \{ \dH{}, \dR{}, \vh{}, \vr{} \}$,
where \dH{}, \dR{} are the vehicles' distance to the 
potential colliding point, and \vh{}, \vr{} are the vehicles'
current speed.

For each simulation sequence, we recorded two set of data,
$\hData_i^H$ and $\hData_i^G$, for learning the human behavior model
and guided exploration distribution respectively.
The data recorded can be written as
follows:
\begin{align*}
    \hData_i^H = \intention{i} \cup \{\tuple{\x{0}, \ar{0}, \ah{0}},
        \hdots,
        \tuple{\x{K_i}, \ar{K_i}, \ah{K_i}} \}, \ 
    \hData_i^G = \{ \tuple{\x{0}, \ar{0}}, \hdots, \tuple{\x{K_i}, \ar{K_i}}\}
\end{align*}
where 
$\intention{i}$ is the human intention at the $ith$
interaction,
$K_i$ is the number of steps in the $ith$ interaction.
\x{t} is the world state at time step $t$.
$\spdR{t}, \ar{t}$ are the speed and acceleration of the robot
car at time step $t$.
$\spdH{t}, \ah{t}$ are the speed and acceleration of the human-driven car at time step $t$.

Each participant was asked to perform the driving task
$8$ times as an aggressive driver and $8$ times as a 
conservative driver.
We have $10$ participants in total, 
this gives us $160$ sequences of interactions for the 
learning purpose, \ie, $80$ for the aggressive human driver,
$80$ for the conservative human driver, and $160$ for the 
guided exploration distribution.

\subsection{Human Behavior Policy}
\label{subsec:human-policies}

A human behavior policy is 
a function that maps the current world state
\x{t}, bounded history \history{k}{t} and human intention
\intention{t} to human actions (\equref{eq:human-policy}),
where human actions are accelerations in our driving tasks.
The Gaussian Process (GP) places a distribution over 
functions. It serves a non-parametric form of
interpolation, and it is extremely robust to unaligned
noisy measurements~\cref{rasmussen2004gaussian}.
In this paper, we use GP as the human behavior prediction model.

We learn a GP model for each human driver type, \ie,
aggressive or conservative.
Our GP model is
specified by a set of mean and covariance functions,
whose input includes the world state $\x{t}$ and
bounded history $\history{k}{t}$, \ie,
$\gx{} = \tuple{\x{t}, \history{k}{t}}$.
To fix the dimension of the input, 
we add paddings with value of $0$ to the history if $t < k$.

\paragraph{Mean function.}
The mean function is a convenient way of incorporating
prior knowledge.
We denote the mean function as 
$\gmean(\gx{})$, and set it initially to be the mean of 
the training data.
This encodes the prior knowledge that, without any additional
knowledge, we expect the human driver to behave similarly to
the average of what we have observed before.

\paragraph{Covariance function.}
The covariance function describes the correlations
between two inputs, \gx{} and $\gx{}'$,
and we use the standard radial-basis function 
kernel~\cref{vert2004primer}.

\vspace{-1.5em}
\begin{eq}
    \begin{split}
        \gcov(\gx{}, \gx{}') = & \exp \bigg{(} -1/2 \bigg{(}
        \big{(} \frac{\dH{} - \dHn{}}{\glen{d}} \big{)}^2 +
        \big{(} \frac{\dR{} - \dRn{}}{\glen{d}} \big{)}^2 +
        \big{(} \frac{\spdH{} - \spdHn{}}{\glen{v}} \big{)}^2 + \\
        & \quad \quad \quad \big{(} \frac{\spdR{} - \spdRn{}}{\glen{v}} \big{)}^2 +
        \underset{\ar{}, \ah{} \in \history{k}{t}}{\sum}
        \bigg{(}
        \big{(} \frac{\ar{} - \arn{}}{\glen{a}} \big{)}^2 +
        \big{(} \frac{\ah{} - \ahn{}}{\glen{a}} \big{)}^2
        \bigg{)} \bigg{)}
    \end{split}
\end{eq}
where the exponential term encodes that similar 
state and history should make similar
predictions.
The length-scale \glen{d}, \glen{v} and \glen{a}
normalize the scale of the data, \ie, distance,
speed and acceleration.

\paragraph{Training.}
We train the GP model with the scikit-learn package~\cref{scikit-learn},
where $80\%$ data is used for training and $20\%$ data is
used for testing.
We evaluate different values of $k$.
\figref{fig:GP-MSE} shows the mean squared train/test error
with respect to $k$.
The errors are large when $k=0$ (people ignore the robot), but converge
quickly within $2$ steps.
This supports our intuition that people adapt to the robot
but have a bounded memory. 
In the remainder of the paper, GP models with
$k=2$ are used to predict human actions.

\begin{figure}[t]
    \centering
    \includegraphics[width=.3\columnwidth]{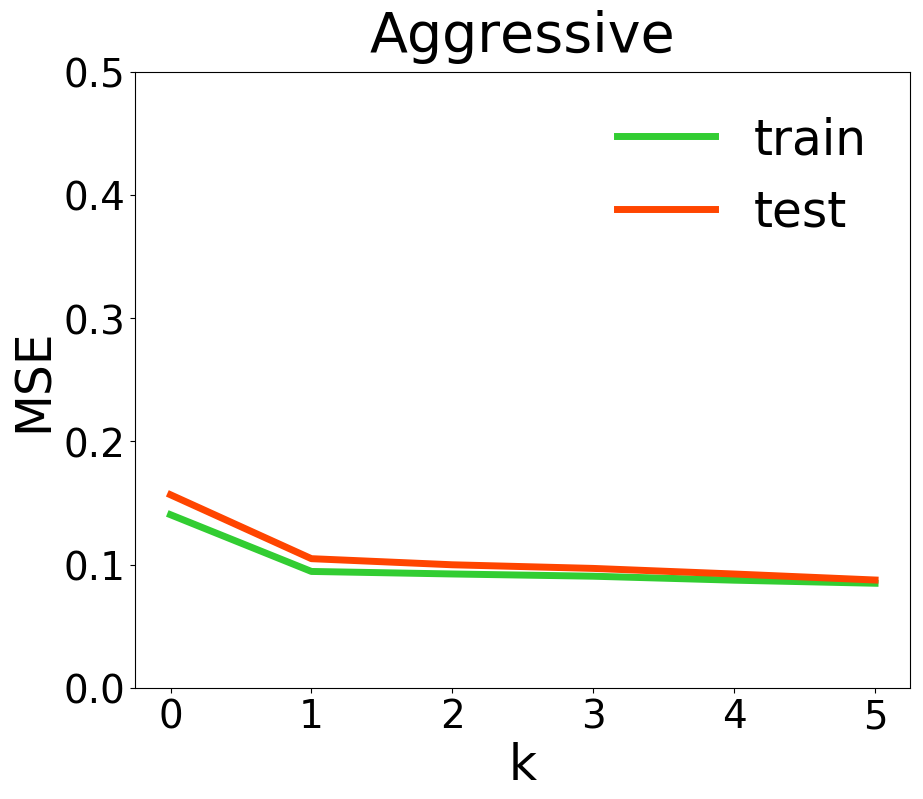}
    \hspace{2.0cm}
    \includegraphics[width=.3\columnwidth]{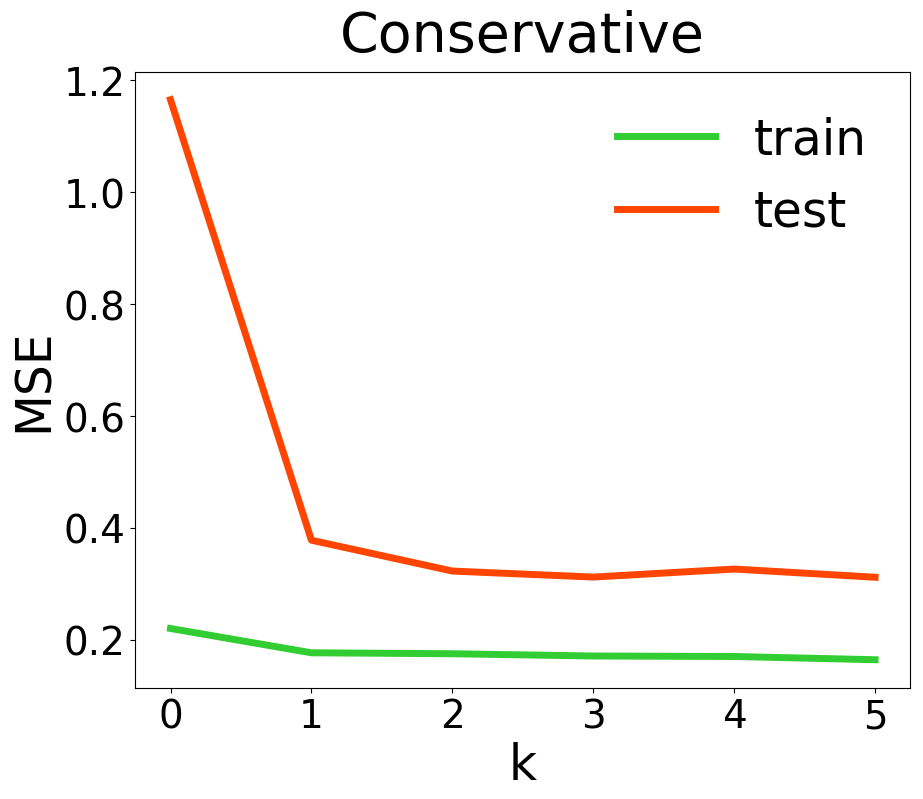}
    \caption{
        The Mean Squared Error (MSE) of GP predictions  with
        increasing history length $k$. 
        The MSE stabilizes for $k\geq 2$, confirming 
         that humans have short, bounded memory in driving tasks.
    }
    \label{fig:GP-MSE}
\end{figure}

\figref{fig:hpolicy} shows GP predictions 
for some example scenarios. 
The plots are generated by varying one of the input 
variables, \ie, $d$ or $v$, while fixing the others.
Due to space constraints, we only
show selected plots where the aggressive human driver
and the conservative human driver are most distinguishable.
The key message in those plots is: the conservative human
driver slows down if there is a potential collision in
the near future, while the aggressive human driver keeps
going and ignores the danger.
The robot can take advantage of this difference to 
actively infer human intentions.

\begin{figure}[t]
    \centering
    \includegraphics[width=.99\columnwidth]{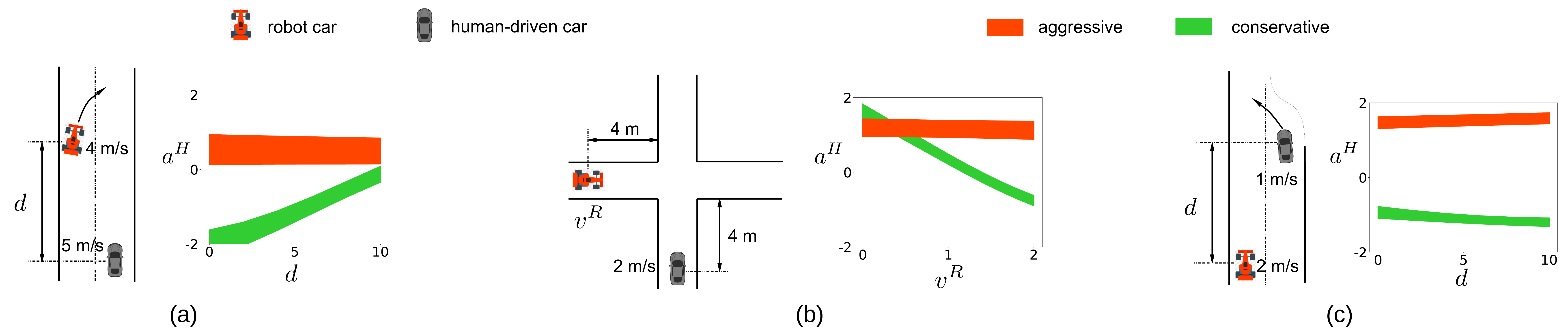}
    \caption{GP predictions of human accelerations, 
        mean and standard error ($y$ axis),
        for some example driving scenarios.
        (a) Predicted human acceleration 
        \wrt the distance
        between two vehicles (lane-switch).
        (b) Predicted human acceleration
        \wrt the velocity of the robot car 
        (intersection). 
        (c) Predict human acceleration 
        \wrt the distance between two vehicles 
        (lane-merge).
    }
    \label{fig:hpolicy}
\end{figure}

\paragraph{Human behavior policy table.}
The human behavior policy will be queried frequently
during the robot planning phase, and GP prediction 
is too slow for online POMDP planning.
Instead, we build a policy table offline for each human
driver type, where we store all the GP predictions.
During the online phase, the POMDP planning algorithm only
needs to query the table and it is much faster.

To build a policy table, 
we discretize the state space and action
space as follows:
\begin{itemize}
    \item Distance ($\mathrm{m}$): near $[0, 5)$, middle
        $[5, 20)$, far $[20, +\infty)$.
    \item Speed ($\mathrm{m/s}$): low $[0, 1)$, middle
        $[1, 5)$, high $[5, +\infty)$.
    \item Acceleration ($\mathrm{m/s^2}$):  decelerate $(-\infty, -0.2)$, keep $[-0.2, 0.2]$, accelerate \allowbreak $(0.2, \allowbreak +\infty)$.
\end{itemize}
With this level of discretization,
the policy table will have $3^8$ entries when history
length $k=2$.

\subsection{Probability Distribution for Guided Exploration}
\label{subsec:probability-guided}

For the autonomous driving task, we are mostly concerned
with a state-action pair being safe or unsafe. 
Consequently, the probability $\safef(\x{}, \ar{})$ 
measures how safe
it is for the robot to explore \tuple{\x{}, \ar{}},
\ie, safe probability.

\paragraph{Prior.}
Since a state-action pair is either safe or unsafe, 
a natural means is to 
use Beta distribution as a prior , \ie,
$\betaD(\alpha(\x{}, \ar{}), \beta(\x{}, \ar{}))$.
The initial safe probability can be computed as

\begin{dm}
    \safef(\x{}, \ar{}) = \frac{\alpha\tuple{\x{}, \ar{}}}{\alpha\tuple{\x{}, \ar{}} + \beta\tuple{\x{}, \ar{}}}
\end{dm}

Initially, we set $\alpha\tuple{\x{}, \ar{}} = 0.05, \beta\tuple{\x{}, \ar{}} = 5, \forall \x{} \in \X{}, \ar{} \in \AR{}$.
This implies that all state-action pairs are close to unsafe 
without seeing any human demonstrations.

\paragraph{Posterior.}
Given a set of human demonstration data
$\hData^G = \{\tuple{\x{0}, \ar{0}}, \hdots, \allowbreak \tuple{\x{N}, \ar{N}}\}$,
the posterior of the safe
probability can be computed as 

\begin{dm}
    \safefp(\x{}, \ar{}) = \frac{\alpha\tuple{\x{}, \ar{}} + n(\x{}, \ar{})}{\alpha\tuple{\x{}, \ar{}} + \beta\tuple{\x{}, \ar{}} + n(\x{}, \ar{})}
\end{dm}
where $n(\x{}, \ar{})$ is the number of times that
\tuple{\x{}, \ar{}} appeared in the human demonstration data.

\paragraph{Safe probability table.}
Similar to the human behavior policy, we store all the
safe probabilities in a table, and we follow the same
discretization intervals for the human behavior policy table.

\section{Simulation Experiments}
\label{sec:experiments}

Our approach, intention POMDP-lite with Guided exploration
(\ipomg),
enables the robot to 
actively infer human intentions and 
choose
explorative actions that are similar to the human expert. 
In this section,
we present some simulation results on several interactive
driving tasks, where the robot 
car interacts with a simulated human driver.
We sought to answer the following two questions:
\begin{itemize}
    \item \textbf{Question A.} Does active
        exploration improve robot efficiency? 
    \item \textbf{Question B.} Does guided exploration 
        improve robot safety?
\end{itemize}

\paragraph{Comparison.}
To answer question A, we compared 
\ipomg with a myopic robot policy that does not
actively explore, \ie, the reward bonus
in~\equref{eq:value} was set to be $0$.
To answer question B, we compared 
\ipomg with the original intention POMDP-lite model 
(\ipom) without guided exploration.

Since the driving scenarios considered in this paper
are relatively simple, one may argue that a simple
heuristic policy might work just as well.
To show that is not the case, we designed
an additional baseline, \ie, heuristic-$k$, and 
it works as follows:
the robot explores at 
the first $k$ steps, \eg, accelerate or switch lane,
then the robot
proceeds to go if the human slows down,
otherwise, the robot waits until the human has crossed.

\paragraph{Parameter settings.}
Both \ipom and \ipomg 
need to set a constant scalar $\disRB$ that
trades off exploration and exploitation (\equref{eq:value} and
\equref{eq:value-safe}).
Similar to previous works~\cref{kolter2009near}, we evaluated a
wide range of values for \disRB, and chose the one 
that had the best performance on \ipom.
In this way, \disRB favors \ipom more than \ipomg.

For the heuristic policy, we need to choose 
parameter $k$, \ie, the number of steps that robot
explores at the beginning.
We evaluated heuristic policies with different 
values of $k \in \{ 1, 2, 3, 4 \}$,
and chose the one that had the best performance.

\paragraph{Performance measures.}
To measure the efficiency of a policy, we used the time
taken for the robot to achieve its goal as the 
performance measure, \ie, $T(goal)$, the less the better.

To measure the safety of a policy, we used the near-miss
rate as the performance measure, \ie, $P(near-miss)$,
since we didn't observe any accidents in the simulations.
We adopted the definition of near-miss from a seminal
paper in the field of traffic safety control~\cref{hayward1972near},
where near-miss was defined based on the time-measured-to-collision (TMTC).
TMTC is the time required for the two vehicles to collide
if they continue at their current velocity.
Intuitively, TMTC is a measurement of danger, and 
lower TMTC value implies that the scenario is more dangerous.
According to~\cref{hayward1972near}, 
a near-miss happens if the value of TMTC is lower than 
$\mathbf{1}$ second,
and an analysis over the films taken with the surveillance
system at an urban interaction suggested a near-miss rate
of $\mathbf{0.35}$ in daily traffic.

\paragraph{Simulation setup.}
For all the simulations performed, 
the robot car was controlled by one of the algorithms 
above.
The human-driven car was controlled by learned human 
behavior policy in~\secref{subsec:human-policies}.
For each simulation run, 
the human driver was set to be aggressive or conservative 
with $0.5$ probability.

The state space \x{} was set to be continuous.
The robot action space was set to be discrete, \ie, 
\{Accelerate, Keep, Decelerate\},
which controls the speed of the car.
The steering angle of the car was controlled independently
by a path tracking algorithm~\cref{coulter1992implementation}.
All planning algorithms were given $0.33$ second per step 
to compute the robot actions online.

\subsection{Driving Scenarios}
\label{subsec:driving-scenarios}



\begin{figure}[t]
    \centering
    \includegraphics[width=.99\columnwidth]{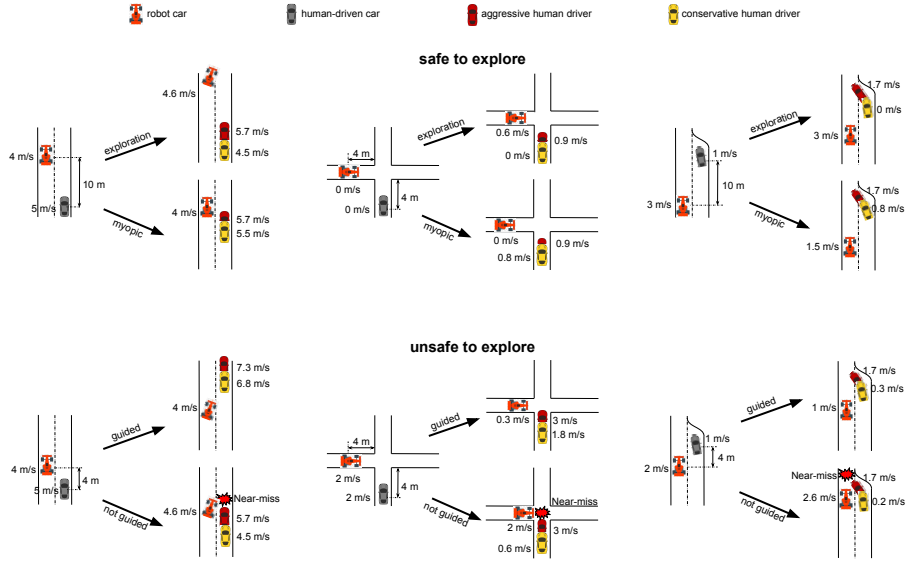}
    \caption{
        Top row: comparison of the active exploration policy and the myopic policy, 
        when exploration is safe. 
        Bottom row: comparison of policies with and without guided exploration, 
        when exploration is unsafe.
    }
    \label{fig:driving-scenarios}
\end{figure}

To test the effectiveness of active exploration, we
selected three interactive driving scenarios
where robot exploration is safe 
(\figref{fig:driving-scenarios}, top row).
Initially, the robot car either had a safe distance to
the human-driven car or had low speed.
This gave the robot car enough space/time to explore 
without too much danger of colliding to the human-driven car.
The myopic robot waited until the 
human-driven car had crossed or slowed down.
However, the active exploration robot 
started to switch to the other lane or accelerate
to test if the human was willing to yield.
If the human was conservative and slowed down, 
the robot proceeded to go and improved its efficiency.

To test the effectiveness of guided exploration,
we selected another three interactive 
driving scenarios where robot exploration is unsafe 
(\figref{fig:driving-scenarios}, bottom row).
Initially, the robot car was near to the human-driven car 
and it had high speed.
Without guided exploration,
the \ipom robot chose to switch lane or cross the 
intersection,
which might cause near-misses.
On the other hand, with guided exploration,
the \ipomg robot chose to not explore and waited for 
the human to cross first.

\subsection{Quantitative Results}
\label{subsec:quantitative-results}

\begin{figure}[t]
    \centering
    \includegraphics[width=.98\columnwidth]{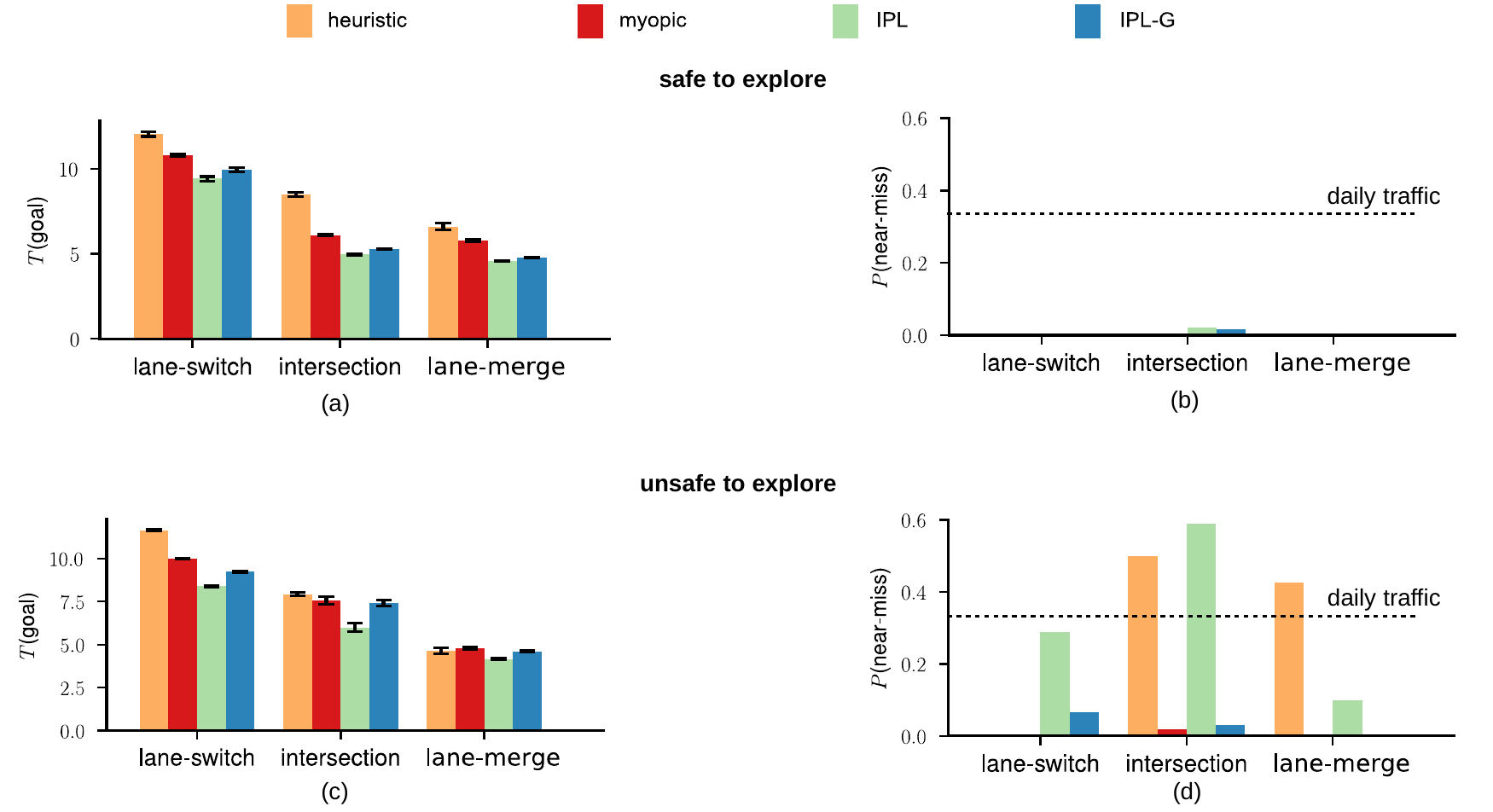}
    \caption{Performance results.
        Top row:
        when it was safe to explore,
        the active exploration policy (\ipom and \ipomg) 
        achieved better efficiency with nearly zero near-misses.
        Bottom row:
        when it was unsafe to explore,
        the \ipomg robot
        achieved significantly lower near-miss rate
        compared with the robot that explores without
        guided exploration (\ipom).
        }
    \label{fig:performance-comparison}
\end{figure}

We performed $200$ simulation runs for each scenario.
The performance results are shown in \figref{fig:performance-comparison}.

When robot exploration was safe,
both \ipom and \ipomg actively
explored. This significantly reduced the time
taken for the robot to achieve its goal,
compared with the robot that followed a myopic policy
or a heuristic policy (\figref{fig:performance-comparison}a).
Very few near-misses ($<0.02$) have been observed for all the
robot policies, which supports our intuition that 
it is safe for the robot to explore in those scenarios
(\figref{fig:performance-comparison}b).
Note that the dashed line represents the near-miss rate in 
daily traffic, which is adopted from a seminal paper in
traffic safety control~\cref{hayward1972near}.
It helps us to calibrate the safety levels of 
different algorithms.

When robot exploration was unsafe,
The \ipom robot was 
aggressive at gathering information. 
This indeed made the robot car more efficient 
(\figref{fig:performance-comparison}c).
However, this is counterbalanced by the fact that it
also led to a lot of near-misses 
(\figref{fig:performance-comparison}d).
The \ipom robot
incurred more near-misses than the daily traffic in the intersection scenario, and it
had significantly higher near-miss rate than the 
\ipomg robot across all tasks, which supports our
intuition that guided exploration can significantly
improve robot safety in driving tasks.
The heuristic policy explored at the first $2$ steps,
which already caused a lot of near-misses.
This implies that arbitrary exploration is dangerous
in general, and should not be encouraged.



We use the averaged robot
trajectories from the lane-switch task as an
example to illustrate the robot policies from different algorithms.
(\figref{fig:sampled-trajectories}).
When it was safe to explore (\figref{fig:sampled-trajectories},
top row), the \ipom robot and the \ipomg robot were able to identify human
intentions quickly, \ie, the belief $P$(conservative) converged to $1$ or $0$
quickly.  Consequently, if the human was conservative, the robot proceeded to
go and improved its efficiency, \ie, $D$(goal) decreased faster than the
myopic robot.  When it was unsafe to explore
(\figref{fig:sampled-trajectories}, bottom row), guided exploration prevented
the \ipomg robot to explore, \ie, its belief $P$(conservative) converged
slower than the belief of the \ipom robot (without guided exploration).
Consequently, the \ipomg robot was less aggressive and it encountered
significantly less near-misses compared to the \ipom robot
(\figref{fig:performance-comparison}d).

\begin{figure}[t]
    \centering
    \includegraphics[width=.98\columnwidth]{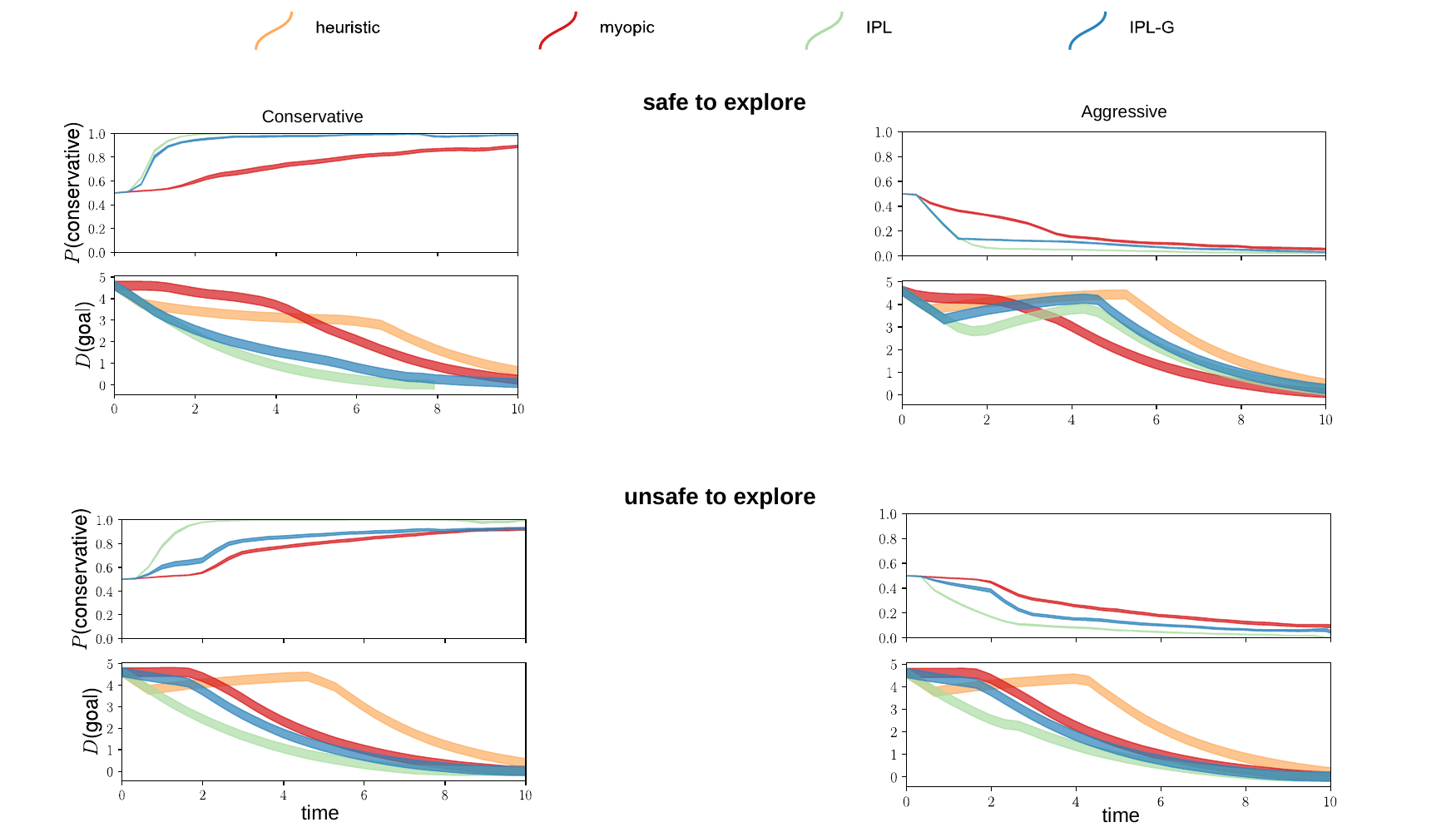}
    \caption{
        Averaged robot trajectories in the lane-switch scenario.
        The plots show the mean and standard error
        of the belief $P$(conservative) and the 
        remaining distance to the robot goal $D$(goal),
        with respect to time.
        }
        \label{fig:sampled-trajectories}
\end{figure}

%
%
%
%

\section{Discussion}
\label{sec:discussion}

Intention POMDP-lite embeds a learned human behavior model in a probabilistic
decision framework. It is task-driven and gathers information on human
intentions only when necessary to improve task performance. 
The simulation experiments suggest that active exploration enables the robot
car to infer human intentions more effectively and improve driving
performance, compared with the myopic policy.
By leveraging expert demonstration data, guided exploration biases exploration
in promising directions and prevents exploration when it is ineffective or
unsafe.  Overall,  intention POMDP-lite with guided exploration
tries to answer the question of \emph{how} and \emph{when} to perform active
exploration in human-robot interactive tasks.
The  diverse robot  behaviors emerge
automatically from the decision framework, without the  kind of explicit manual
programming required for heuristic driving policies. 

Although our experiments are specific to autonomous driving, understanding
human intentions is essential in a wide range of human-robot interaction
tasks, and our overall approach is generally applicable.
For example,  a kitchen assistant robot tries to understand the recipe that a
human tries to follow and helps by preparing the ingredients.
Similarly, a robot tries to understand a human's plan for
assembling a piece of furniture and gathers the parts and tools required.
In these settings, the risk of exploration may not be safety, but potentially
negative impact on task performance.

The current work has several limitations that require further
investigation. One important issue is human intention modeling.  We treat
intention as a single discrete random variable.  Specifically, our current
experiment design considers only two intentions: aggressive or conservative.
The simplified intention model allows us to analyze the core technical issues
without the interference of confounding factors. In reality, human drivers may
exhibit a mixture of aggressive and conservative behaviors.
A multi-dimensional, continuous parameterization would provide a richer and
more accurate intention representation.  In addition, we assume that intention
is static, while it may change over time.  We expect our approach to be
robust against intention changes, as online planning handles
unexpected changes naturally, but we plan to conduct a human subject study to
examine this issue further.

\medskip
\begin{small}
    \noindent\textbf{Acknowledgments.}
    This work is supported in part by the Singapore NRF through the SMART
    Future Urban Mobility IRG and MOE AcRF grant  MOE2016-T2-2-068. 

    \bibliographystyle{styles/bibtex/splncs03}
    \bibliography{wafr2018}

\begin{thebibliography}{10}
\providecommand{\url}[1]{\texttt{#1}}
\providecommand{\urlprefix}{URL }

\bibitem{abbeel2004apprenticeship}
Abbeel, P., Ng, A.Y.: Apprenticeship learning via inverse reinforcement
  learning. In: Proc. Int. Conf. on Machine Learning. p.~1. ACM (2004)

\bibitem{astington1993child}
Astington, J.W.: The child's discovery of the mind, vol.~31. Harvard University
  Press (1993)

\bibitem{bai2015intention}
Bai, H., Cai, S., Ye, N., Hsu, D., Lee, W.S.: Intention-aware online pomdp
  planning for autonomous driving in a crowd. In: Proc. IEEE Int. Conf. on
  Robotics \& Automation. pp. 454--460. IEEE (2015)

\bibitem{bandyopadhyay2013intention}
Bandyopadhyay, T., Won, K.S., Frazzoli, E., Hsu, D., Lee, W.S., Rus, D.:
  Intention-aware motion planning. In: Proc. Int. Workshop on the Algorithmic
  Foundations of Robotics, pp. 475--491. Springer (2013)

\bibitem{bratman1987intention}
Bratman, M.: Intention, plans, and practical reason. Harvard University Press,
  Cambridge, MA (1987)

\bibitem{chen2016pomdp}
Chen, M., Frazzoli, E., Hsu, D., Lee, W.S.: Pomdp-lite for robust robot
  planning under uncertainty. In: Proc. IEEE Int. Conf. on Robotics \&
  Automation. pp. 5427--5433. IEEE (2016)

\bibitem{coulter1992implementation}
Coulter, R.C.: Implementation of the pure pursuit path tracking algorithm.
  Tech. rep., Carnegie-Mellon UNIV Pittsburgh PA Robotics INST (1992)

\bibitem{duff2003design}
Duff, M.O.: Design for an optimal probe. In: Proc. Int. Conf. on Machine
  Learning. pp. 131--138 (2003)

\bibitem{feinfield1999young}
Feinfield, K.A., Lee, P.P., Flavell, E.R., Green, F.L., Flavell, J.H.: Young
  children's understanding of intention. Cognitive Development  14(3),
  463--486 (1999)

\bibitem{fern2007decision}
Fern, A., Natarajan, S., Judah, K., Tadepalli, P.: A decision-theoretic model
  of assistance. In: Proc. Int. Joint Conf. on Artificial Intelligence. pp.
  1879--1884 (2007)

\bibitem{garcia2012safe}
Garcia, J., Fern{\'a}ndez, F.: Safe exploration of state and action spaces in
  reinforcement learning. Journal of Artificial Intelligence Research  45,
  515--564 (2012)

\bibitem{hayward1972near}
Hayward, J.C.: Near miss determination through use of a scale of danger. Tech.
  rep., Pennsylvania State University PA, USA (1972)

\bibitem{kaelbling1998planning}
Kaelbling, L.P., Littman, M.L., Cassandra, A.R.: Planning and acting in
  partially observable stochastic domains. Artificial intelligence  101(1-2),
  99--134 (1998)

\bibitem{kahneman2003maps}
Kahneman, D.: Maps of bounded rationality: Psychology for behavioral economics.
  The American economic review pp. 1449--1475 (2003)

\bibitem{kolter2009near}
Kolter, J.Z., Ng, A.Y.: Near-bayesian exploration in polynomial time. In: Proc.
  Int. Conf. on Machine Learning. pp. 513--520. ACM (2009)

\bibitem{lam2015improving}
Lam, C.P., Yang, A.Y., Driggs-Campbell, K., Bajcsy, R., Sastry, S.S.: Improving
  human-in-the-loop decision making in multi-mode driver assistance systems
  using hidden mode stochastic hybrid systems. In: Proc. IEEE Int. Conf. on
  Intelligent Robots and Systems. pp. 5776--5783. IEEE (2015)

\bibitem{nikolaidis2016formalizing}
Nikolaidis, S., Kuznetsov, A., Hsu, D., Srinivasa, S.: Formalizing human-robot
  mutual adaptation: A bounded memory model. In: Proc. ACM/IEEE Int. Conf. on
  Human-Robot Interaction. pp. 75--82. IEEE Press (2016)

\bibitem{nikolaidis2015efficient}
Nikolaidis, S., Ramakrishnan, R., Gu, K., Shah, J.: Efficient model learning
  from joint-action demonstrations for human-robot collaborative tasks. In:
  Proc. ACM/IEEE Int. Conf. on Human-Robot Interaction. pp. 189--196. ACM
  (2015)

\bibitem{scikit-learn}
Pedregosa, F., Varoquaux, G., Gramfort, A., Michel, V., Thirion, B., Grisel,
  O., Blondel, M., Prettenhofer, P., Weiss, R., Dubourg, V., Vanderplas, J.,
  Passos, A., Cournapeau, D., Brucher, M., Perrot, M., Duchesnay, E.:
  Scikit-learn: Machine learning in {P}ython. Journal of Machine Learning
  Research  12,  2825--2830 (2011)

\bibitem{rasmussen2004gaussian}
Rasmussen, C.E.: Gaussian processes in machine learning. In: Advanced lectures
  on machine learning, pp. 63--71. Springer (2004)

\bibitem{rubinstein1998modeling}
Rubinstein, A.: Modeling bounded rationality. MIT press (1998)

\bibitem{sadigh2016information}
Sadigh, D., Sastry, S.S., Seshia, S.A., Dragan, A.: Information gathering
  actions over human internal state. In: Proc. IEEE Int. Conf. on Intelligent
  Robots and Systems. pp. 66--73. IEEE (2016)

\bibitem{sadigh2016planning}
Sadigh, D., Sastry, S., Seshia, S.A., Dragan, A.D.: Planning for autonomous
  cars that leverage effects on human actions. In: Proc. Robotics: Science \&
  Systems (2016)

\bibitem{sunberg2017value}
Sunberg, Z.N., Ho, C.J., Kochenderfer, M.J.: The value of inferring the
  internal state of traffic participants for autonomous freeway driving. In:
  American Control Conference (ACC). pp. 3004--3010. IEEE (2017)

\bibitem{vert2004primer}
Vert, J.P., Tsuda, K., Sch{\"o}lkopf, B.: A primer on kernel methods. Kernel
  methods in computational biology  47,  35--70 (2004)

\end{thebibliography}
\end{small}

\section{Appendix}
\label{sec:appendix}

\subsection{Proof of~\thmref{thm:theorem1}}


\thmref{thm:theorem1} is essentially a PAC bound,
and the key to prove it is to show
that at each time step our algorithm is $\epsilon-$optimistic
with respect to the Bayes-optimal policy, and the value
of optimism decays to zero given enough samples of 
unknown state action pairs.

It is obvious that the value of optimism of our algorithm 
(\equref{eq:value-safe})
decays faster than the original POMDP-lite algorithm
(\equref{eq:value}), 
since $0 < \safef(\x{}, \ar{}) < 1$, which means strictly 
less reward bonus.
In addition,
the following lemma states that with proper choice 
of \disRB, our algorithm generates a
value function that is $\epsilon-$optimistic with respect to the 
Bayes-optimal policy.

\begin{lemma}{($\epsilon-$optimistic)}
    Let $\safeVStar(b_t, \x{t})$ be the value function for 
    our algorithm (with reward bonus), and let $V^*(b_t, \x{t})$
    be the Bayes-optimal value function.
    If $\disRB=\frac{|\X{}|^2|\AR|}{\safeConstant(1-\gamma)^2}$, then
    $\forall \x{t}$, $\safeVStar(b_t, \x{t}) \geq V^*(b_t, \x{t}) - \epsilon$.
    \label{lem:lemma1}
\end{lemma}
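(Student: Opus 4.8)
\subsection*{Proof plan for~\lemref{lem:lemma1}}

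The plan is to prove the lemma by comparison with the unguided POMDP-lite value function $\tildeV^*$ of~\equref{eq:value}, so that the substantive $\epsilon$-optimism estimate can be inherited from~\cref{chen2016pomdp} rather than redone from scratch. Recall from~\cref{chen2016pomdp} that when the bonus coefficient in~\equref{eq:value} is set to $\frac{|\X{}|^2|\AR{}|}{(1-\gamma)^2}$, the resulting $\tildeV^*$ is $\epsilon$-optimistic, i.e. $\tildeV^*(b_t,\x{t}) \geq V^*(b_t,\x{t}) - \epsilon$ for all $\x{t}$. The only difference between~\equref{eq:value} and~\equref{eq:value-safe} is that the bonus $\rb$ is scaled by $\safef(\x{t},\ar{t}) \in (0,1)$, which shrinks it and could in principle destroy optimism. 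The key observation is that the choice $\disRB = \frac{|\X{}|^2|\AR{}|}{\safeConstant(1-\gamma)^2}$ exactly compensates, because $\safef$ never drops below $\safeConstant$.

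First I would bound the effective bonus coefficient pointwise. By the definition $\safeConstant = \min\{\safef(\x{},\ar{})\}$ we have $\safef(\x{},\ar{}) \geq \safeConstant$ for every state-action pair, hence
\begin{dm}
    \disRB\,\safef(\x{},\ar{}) \;\geq\; \disRB\,\safeConstant \;=\; \frac{|\X{}|^2|\AR{}|}{\safeConstant(1-\gamma)^2}\cdot\safeConstant \;=\; \frac{|\X{}|^2|\AR{}|}{(1-\gamma)^2}.
\end{dm}
Thus at every $(\x{},\ar{})$ the guided algorithm applies a bonus coefficient at least as large as the one used by the unguided POMDP-lite algorithm. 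Moreover, since $\rb$ is an expected $L_1$ distance between consecutive beliefs it is non-negative, so the guided bonus $\disRB\,\safef(\x{},\ar{})\,\rb$ dominates the unguided bonus and both are $\geq 0$.

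Next I would convert this pointwise bonus inequality into an inequality between the two fixed points. View~\equref{eq:value} and~\equref{eq:value-safe} as fixed-point equations for Bellman operators $\mathcal{T}_0$ and $\mathcal{T}_G$; both are $\gamma$-contractions and both are monotone (order preserving), as they share the same mean reward $\rwdR{}$ and the same mean transition $P(\x{t+1}\mid b_t,\x{t},\ar{t})$ with the belief held fixed at $b_t$, differing only by a larger non-negative additive bonus. The bonus bound gives $\mathcal{T}_G V \geq \mathcal{T}_0 V$ pointwise for every $V$; iterating both operators from a common starting function and using monotonicity inductively yields $\mathcal{T}_G^{\,n} V \geq \mathcal{T}_0^{\,n} V$ for all $n$, and passing to the limit gives $\safeVStar(b_t,\x{t}) \geq \tildeV^*(b_t,\x{t})$. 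Chaining with the inherited optimism bound,
\begin{dm}
    \safeVStar(b_t,\x{t}) \;\geq\; \tildeV^*(b_t,\x{t}) \;\geq\; V^*(b_t,\x{t}) - \epsilon,
\end{dm}
which is exactly the assertion of the lemma.

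I expect the main obstacle to be conceptual rather than computational: seeing that the factor $1/\safeConstant$ inside $\disRB$ is precisely what keeps the effective bonus $\disRB\,\safef$ above the POMDP-lite threshold $\frac{|\X{}|^2|\AR{}|}{(1-\gamma)^2}$, so that discouraging exploration through $\safef$ never costs optimism. The remaining ingredients---non-negativity of $\rb$, monotonicity of the two Bellman operators, and the ordering of their fixed points under a pointwise operator inequality---are routine, and the genuinely delicate $\epsilon$-optimism estimate is supplied by~\cref{chen2016pomdp}.
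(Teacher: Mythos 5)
Your proposal is correct and follows essentially the same route as the paper's own proof: both hinge on the observation that $\safef(\x{},\ar{}) \geq \safeConstant$ cancels the $1/\safeConstant$ factor in $\disRB$, so the effective bonus coefficient dominates the one required by the POMDP-lite analysis, after which the substantive $\epsilon$-optimism bound is imported from~\cref{chen2016pomdp}. Your explicit monotone Bellman-operator argument for passing from the pointwise bonus inequality to an inequality between the fixed points $\safeVStar \geq \tildeV^*$ is a step the paper's proof leaves implicit (it simply lower-bounds the bonus inside the recursion), so if anything your write-up is slightly more careful, but the underlying idea is identical.
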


\begin{proof}{ (\lemref{lem:lemma1}) }
    According to~\equref{eq:value-safe}, we have

    \begin{equation}
        \label{eq:value-safe}
        \begin{split}
            \safeVStar(b_t, \x{t}) = & \underset{\ar{t}}{\max} \bigg{\{} \frac{|\X{}|^2|\AR|}{\safeConstant(1-\gamma)^2} \safef(\x{t}, \ar{t}) \rb(b_t, \x{t}, \ar{t}) +
            \rwdR{}(b_t, \x{t}, \ar{t}) + \\
            & \underset{\x{t+1}}{\sum}
            \gamma P(\x{t+1} | b_t, \x{t}, \ar{t}) \safeVStar(b_{t}, \x{t+1})
            \bigg{\}} \\
            \geq & \underset{\ar{t}}{\max} \bigg{\{} \frac{|\X{}|^2|\AR|}{(1-\gamma)^2} \rb(b_t, \x{t}, \ar{t}) +
            \rwdR{}(b_t, \x{t}, \ar{t}) + \\
            & \underset{\x{t+1}}{\sum}
            \gamma P(\x{t+1} | b_t, \x{t}, \ar{t}) \safeVStar(b_{t}, \x{t+1})
            \bigg{\}} \\
            \geq & V^*(b_t, \x{t}) - \epsilon
        \end{split}
    \end{equation}

    The second line follows from the fact that
    $\safef(\x{t}, \ar{t}) \geq \safeConstant$. 
    The third line follows from Lemma $3$ in~\cref{chen2016pomdp}.
    
\end{proof}

With~\lemref{lem:lemma1}, the analysis in the 
POMDP-lite paper~\cref{chen2016pomdp} can be applied here, 
and thus proves our theorem.

\end{document}